\pdfoutput=1
\documentclass[runningheads]{llncs}

\usepackage[T1]{fontenc}
\usepackage[utf8]{inputenc}
\usepackage{amsmath,amssymb}
\usepackage{booktabs}
\usepackage{stmaryrd}
\usepackage{graphicx}
\usepackage{tikz}
\usepackage{xcolor}
\usepackage{hyperref}
\hypersetup{
  colorlinks=true,
  linkcolor=blue,
  citecolor=blue,
  urlcolor=blue,
  pdftitle={Decision-Aware Approximation of Belief Functions for Evidential Combinatorial Optimization},
  pdfauthor={Sohaib Afifi},
  pdfkeywords={belief functions, decision-aware approximation, combinatorial optimization,
    decision regret, evidential shortest path}
}
\newcommand{\Crit}{\mathrm{Crit}}
\newcommand{\xstar}{x^\star}
\newcommand{\mhat}{\hat m}
\newcommand{\cbar}{\bar c}
\newcommand{\lo}{\underline a}
\newcommand{\hi}{\overline a}
\newcommand{\R}{\mathbb{R}}
\newcommand{\preceqd}{\preceq_d}
\newcommand{\rev}[1]{#1}
\newcommand{\trim}[1]{#1}
\newcommand{\rr}[1]{#1}

\begin{document}

\title{Decision-Aware Approximation of Belief Functions for Evidential Combinatorial Optimization}
\titlerunning{Decision-Aware Approximation of Belief Functions}

\author{Sohaib Afifi\inst{1}}
\authorrunning{S. Afifi}
\institute{Univ.\ Artois, UR 3926, Laboratoire de G\'enie Informatique et
d'Automatique de l'Artois (LGI2A), B\'ethune, France\\
\email{sohaib.afifi@univ-artois.fr}}

\maketitle

\begin{abstract}
Reducing the number of focal elements of a mass function is classically driven by an
intrinsic distance, such as Jaccard or Jousselme, that keeps the approximation close to the
original as a body of evidence. We consider instead the case where the mass function feeds a linear
combinatorial optimisation problem with evidential costs. What should then be preserved is
not the closeness of the two mass functions, but the quality of the decision they induce. We
introduce a decision-aware approximation that targets the regret of the decision: one
decides with the cheaper approximation and is evaluated under the true mass function. On a
minimal shortest path, the distance-optimal approximation flips the decision while a
decision-aware merge preserves it, and this occurs on a non-negligible fraction of random
instances. We prove a
one-point bound that localises the regret at the true optimum, turn it into an exact dynamic
program for the scalar case, and extend it to an online version that prunes focal elements
before the final cost is known. In experiments the decision-aware compressor flips the decision
less often than representation-aware compression, for both the linear criterion and a non-linear
proxy read-out.
\keywords{Belief functions \and Decision-aware approximation \and Combinatorial
optimization \and Decision regret \and Evidential shortest path.}
\end{abstract}

\section{Introduction}
\label{sec:intro}

Mass functions built by fusion or by propagation often carry many focal elements. Reducing
their number is a common need. Intrinsic distances are the natural tool when the goal is to
\emph{summarise the evidence}: they keep the approximation close to the original as a body of
evidence.

Our setting is different. The belief function is an \emph{input} to a combinatorial
optimisation, and what we care about is the \emph{decision} it leads to. We propose a
\emph{decision-aware} approximation for this case. Instead of keeping the approximation close
to the original, it \rev{aims to keep} the decision taken under the approximation optimal for the true
problem. \trim{Intrinsic approximations solve a different problem: when the mass function is only
an intermediate used to make a decision, preserving the decision can matter more than preserving
it. In transferable-belief-model terms~\cite{smets1994tbm}, $m$ is the credal object and $\mhat$ a
computational shortcut to the decision, not a revised belief.} The two goals are
complementary and need not pick the same approximation.

\paragraph{Contributions.}
\begin{enumerate}
  \item We define the \emph{decision-aware} approximation and the \emph{approximation
        regret} it controls, and give a small evidential shortest path where the
        distance-optimal merge and the decision-aware merge differ, the former changing the
        decision (Sec.~\ref{sec:method}).
  \item We prove a one-point bound $R(\mhat)\le\Delta(\xstar(m))$ for any approximation that
        is monotone for the criterion (Thm.~\ref{thm:bound}).
  \item We give an exact $O(N^2K)$ decision-aware dynamic program for the scalar case
        (Prop.~\ref{prop:dp}) and an online version, and we measure both against
        representation-aware baselines on random instances (Sec.~\ref{sec:exp}).
\end{enumerate}

\section{Background and Related Work}
\label{sec:background}

\paragraph{Problem.}
We solve a linear combinatorial problem $\xstar=\arg\min_{x\in X}c^\top x$ with
$X\subseteq\{0,1\}^n$, for example the set of $s$ to $t$ paths in a graph. The cost $c$ is
uncertain. It is described by a mass function $m$~\cite{shafer1976,smets1994tbm} whose focal
elements are \emph{boxes} $F=[\lo_F,\hi_F]$, that is, one cost interval per
coordinate. \rev{We take the cost frame to be a finite integer grid, so the set
cardinalities used by the Jaccard index in Sec.~\ref{sec:method} count integer points; a
continuous-volume variant is identical with volumes in place of counts.} The criterion is linear in $m$. Since $X\subseteq\{0,1\}^n$ we have $x\ge 0$, and
the \emph{lower} expected cost is
\[
  \Crit(x;m)=\sum_F m(F)\,\min_{c\in F}c^\top x=\sum_F m(F)\,\lo_F^\top x=\cbar^\top x,
  \qquad \cbar=\sum_F m(F)\,\lo_F .
\]
The \emph{upper} expected cost is the same with the upper bound $\hi_F$. The decision
is $\xstar(m)\in\arg\min_x\Crit(x;m)$, made unique by a fixed deterministic tie-breaking
rule.

\paragraph{Safe merges.}
We order boxes by componentwise domination of both endpoints: $F\preceqd G$ iff
$\lo_F\le\lo_G$ and $\hi_F\le\hi_G$. The \emph{safe merge} of $A_i,A_j$ is
their join, the smallest box dominating both,
\[
  \lo_B=\max(\lo_i,\lo_j),\qquad \hi_B=\max(\hi_i,\hi_j),
\]
with masses added~\cite{tedjini2021specificity,denoeux2001inner}. \trim{It is the join in the
cost-dominance order (kept to make the criterion monotone), not a set-inclusion outer
approximation.} Both bounds only rise, so the lower and the upper
expected cost can only increase:
\[
  \Delta(x):=\Crit(x;\mhat)-\Crit(x;m)\ge 0 \quad\text{for } x\ge 0 .
\]
The question we study is \emph{which} safe merge to make.

\paragraph{Why approximate?}
For belief functions, and more generally $2$-monotone capacities, these problems are tractable
when their deterministic version is~\cite{vu2025capacities}. So approximation is \emph{not}
needed to make a single decision tractable. It matters for another reason: the focal set can be
large and grows when beliefs are combined along a path, so reducing it saves memory and keeps
online computation feasible. We study how to reduce it without changing the decision.
\rr{The approximation is a computational shortcut for the decision at hand, not a revised body of
evidence: $\mhat$ is built around the current decision (via $\xstar(m)$ or its online proxy) and
consumed by that optimisation, not stored in place of $m$. A later, different problem is served by
recompressing from the original $m$.}

\paragraph{Related work.}
Reducing the number of focal elements has a long history. Representation-oriented methods keep
the largest masses~\cite{tessem1993approximations}, cluster focal elements into inner and outer
approximations~\cite{denoeux2001inner}, merge them under various
relations~\cite{tedjini2021specificity}, or pick the closest approximation for an intrinsic
distance such as Jaccard or Jousselme~\cite{jousselme2001distance}; none looks at a downstream
optimisation. Closest to us, Bauer pairs approximation with decision making in the
Dempster--Shafer setting \rev{and already proposes a decision-driven approximation, but
offers no decision-regret guarantee and targets a pignistic single-attribute decision rather than
a combinatorial optimum}~\cite{bauer1997approximation}. Decision-making with belief functions is
surveyed in~\cite{denoeux2019decision}, and the corresponding combinatorial problems are
tractable for belief functions~\cite{vu2022evidential,vu2025capacities}. \rev{The
decision-regret objective is shared}
with decision-focused learning~\cite{mandi2024decision,elmachtoub2022spo,berthet2020perturbed,%
pogancic2020blackbox}, but the goal differs: those methods \emph{estimate} an unknown cost
\rev{from features}, while we \emph{compress} a known belief function to save computation and memory.

\section{Decision-Aware Approximation}
\label{sec:method}

\begin{definition}[Approximation regret]
For a $\preceqd$-safe approximation $\mhat$ of $m$,
\[
  R(\mhat)=\Crit\big(\xstar(\mhat);m\big)-\Crit\big(\xstar(m);m\big)\ \ge 0 .
\]
\end{definition}
We decide with $\mhat$ and are judged by $m$, so $R(\mhat)=0$ when the approximate
decision stays optimal for the true problem. \trim{Both approximations reduce the focal set by the
same safe merges and differ only in which merge they choose: a representation-aware one stays
closest to $m$ (grouping the most similar or keeping the most significant focals), a decision-aware
one keeps $R$ small. The two need not agree.} Note that $R$ is not a minimax
regret \emph{criterion}: it is the regret caused by approximating, and sits on top of any criterion.

\paragraph{A minimal example.}
\trim{The diamond of Fig.~\ref{fig:diamond} has two $s$ to $t$ paths and three focal boxes; we use
the lower expected cost.}

\begin{figure}[t]
\centering
\begin{minipage}[c]{0.40\textwidth}
\centering
\begin{tikzpicture}[scale=0.9]
  \node[circle,draw,minimum size=7mm] (s) at (0,0) {$s$};
  \node[circle,draw,minimum size=7mm] (u) at (2,1) {$u$};
  \node[circle,draw,minimum size=7mm] (t) at (4,0) {$t$};
  \draw[->,thick] (s) to[bend right=12] node[below]{\footnotesize $s$-$t$} (t);
  \draw[->,thick] (s) -- node[above left]{\footnotesize $s$-$u$} (u);
  \draw[->,thick] (u) -- node[above right]{\footnotesize $u$-$t$} (t);
\end{tikzpicture}
\end{minipage}\hfill
\begin{minipage}[c]{0.56\textwidth}
\centering
\begin{tabular*}{0.8\textwidth}{@{\extracolsep{\fill}}lcccc}
\toprule
 & $s$-$t$ & $s$-$u$ & $u$-$t$ & mass \\
\midrule
$A_1$ & $[0,1]$ & $[4,5]$ & $[4,5]$ & $0.4$ \\
$A_2$ & $[0,7]$ & $[0,1]$ & $[0,1]$ & $0.3$ \\
$A_3$ & $[6,7]$ & $[0,1]$ & $[0,1]$ & $0.3$ \\
\bottomrule
\end{tabular*}
\end{minipage}
\caption{The diamond instance (left): path $x_1$ is edge $s$-$t$, path $x_2$ is edges $s$-$u$
and $u$-$t$. Its three focal boxes (right), one cost interval per edge, with their masses.}
\label{fig:diamond}
\end{figure}

\trim{Here $\cbar=\sum_F m(F)\,\lo_F=(1.8,\,1.6,\,1.6)$, so $x_1$ costs $1.8$ and $x_2$ costs
$1.6+1.6=3.2$: the true optimum is $x_1$, by a margin of $1.4$.}

\trim{We compress the three boxes into two; there are three safe merges (Table~\ref{tab:merges}).
Merging $A_2,A_3$ moves mass onto the value $6$ on edge $s$-$t$, so its cost jumps to $3.6$ and
$x_2$ wins: the decision changes, $R=\Crit(x_2;m)-\Crit(x_1;m)=3.2-1.8=1.4$. The other two merges
keep $x_1$ optimal, so $R=0$.}

\begin{table}[tb]
\centering
\caption{The three safe merges from 3 to 2 focal boxes: aggregated cost $\cbar$ and path costs
(lower expected cost), Jaccard similarity \rev{(between the two merged boxes)} and Jousselme distance \rev{(between $m$ and $\mhat$)}, and the regret $R$.
Only $A_2A_3$ changes the decision, and it is also the most similar merge.}
\label{tab:merges}
\setlength{\tabcolsep}{4pt}
\begin{tabular}{lcccccc}
\toprule
merge & $\cbar$ after merge & cost $x_1$ & cost $x_2$ & Jaccard sim. & Jousselme $d$ & $R$ \\
\midrule
$A_1A_2$ & $(1.8,\,2.8,\,2.8)$ & $1.8$ & $5.6$ & $0.00$ & $0.55$ & $0$ \\
$A_1A_3$ & $(4.2,\,2.8,\,2.8)$ & $4.2$ & $5.6$ & $0.00$ & $0.61$ & $0$ \\
$A_2A_3$ & $(3.6,\,1.6,\,1.6)$ & $3.6$ & $3.2$ & $\mathbf{0.25}$ & $\mathbf{0.26}$ & $\mathbf{1.4}$ \\
\bottomrule
\end{tabular}
\end{table}

\trim{Both intrinsic measures in Table~\ref{tab:merges} single out $A_2A_3$, the merge that changes
the decision, as the closest to $m$. This is no flaw of measures that serve a different goal, but a
decision-aware choice keeps $R=0$ instead.}

\paragraph{A one-point bound.}
The regret is controlled by the distortion at the \emph{single} true optimum. The result needs
only that the approximation does not lower the criterion.

\begin{theorem}[One-point bound]
\label{thm:bound}
Let $\mhat$ be an approximation that is monotone for the criterion, that is
$\Delta(x):=\Crit(x;\mhat)-\Crit(x;m)\ge 0$ for all $x\in X$.
Then
\[
  R(\mhat)\ \le\ \Delta\big(\xstar(m)\big).
\]
\end{theorem}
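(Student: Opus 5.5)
The plan is a three-step chain of inequalities. It uses only two facts: $\xstar(\mhat)$ minimises $\Crit(\cdot;\mhat)$ over $X$, and $\Delta\ge 0$ on $X$. Write $\hat x=\xstar(\mhat)$ and $x^\ast=\xstar(m)$. Both are elements of $X$, because the decision is a minimiser over $X$; tie-breaking only selects one minimiser and plays no further role.

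\emph{Step 1 (monotonicity at $\hat x$).} Since $\Delta(\hat x)\ge 0$, we have $\Crit(\hat x;m)\le \Crit(\hat x;\mhat)$. This is the only place the hypothesis is used at a point other than the true optimum.

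\emph{Step 2 (optimality of $\hat x$ for $\mhat$).} Since $\hat x$ minimises $\Crit(\cdot;\mhat)$ and $x^\ast\in X$, we get $\Crit(\hat x;\mhat)\le\Crit(x^\ast;\mhat)$.

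\emph{Step 3 (rewrite at $x^\ast$).} By the definition of $\Delta$, $\Crit(x^\ast;\mhat)=\Crit(x^\ast;m)+\Delta(x^\ast)$.

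Chaining the three steps gives
\[
  \Crit(\hat x;m)\le \Crit(\hat x;\mhat)\le \Crit(x^\ast;\mhat)=\Crit(x^\ast;m)+\Delta(x^\ast).
\]
Subtracting $\Crit(x^\ast;m)$ from both ends yields $R(\mhat)\le\Delta(\xstar(m))$.

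There is no real obstacle. The one point worth flagging is that the bound needs no linearity and no box structure. It therefore holds for the safe merges of Sec.~\ref{sec:background}, where $\Delta\ge0$ was shown for all $x\ge0$ and hence on $X\subseteq\{0,1\}^n$. It equally holds for any other criterion-monotone compression.

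The analogous lower bound $R\ge 0$ is not needed here; it already follows from the optimality of $x^\ast$ for $m$.

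Finally, as a sanity check we can verify the bound on the diamond instance. For the merge $A_2A_3$ we have $\Delta(x_1)=3.6-1.8=1.8$, which is at least $R=1.4$.
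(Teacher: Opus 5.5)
Your proof is correct and is essentially the paper's own argument: the same chain $\Crit(\hat x;m)\le\Crit(\hat x;\mhat)\le\Crit(\xstar;\mhat)=\Crit(\xstar;m)+\Delta(\xstar)$, using $\Delta(\hat x)\ge0$ and the optimality of $\hat x$ for $\Crit(\cdot;\mhat)$. Your diamond check, $\Delta(x_1)=1.8\ge R=1.4$ for the merge $A_2A_3$, is also correct.
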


\begin{proof}
Write $\hat x=\xstar(\mhat)$ and $\xstar=\xstar(m)$. Since $\hat x$ minimises
$\Crit(\cdot;\mhat)$, $\Crit(\hat x;\mhat)\le\Crit(\xstar;\mhat)$. Using $\Delta(\hat x)\ge 0$ and this optimality,
\[
  \begin{aligned}
  R(\mhat)=\Crit(\hat x;m)-\Crit(\xstar;m)
    &\le \Crit(\hat x;\mhat)-\Crit(\xstar;m)\\
    &\le \Crit(\xstar;\mhat)-\Crit(\xstar;m)=\Delta(\xstar).
  \end{aligned}
\]
Finally $R\ge 0$ because $\hat x\in X$ and $\xstar$ minimises $\Crit(\cdot;m)$ over $X$. \qed
\end{proof}

\rev{The chain invokes $\Delta\ge0$ only at $\hat x$ and $\xstar$, so the computable bound is
evaluated only at the true optimum, and it holds for \emph{any} minimiser, hence is insensitive to
how ties in $\arg\min$ are broken.}

\rr{Monotonicity is the whole hypothesis. Here it holds because the expected costs are linear in
$m$ and the safe merge raises every bound, so $\Delta(x)=(\cbar(\mhat)-\cbar(m))^\top x\ge 0$ for
$x\ge 0$; more generally it holds for any criterion isotone in the bounds the merge raises, covering
both the lower and the upper expected cost. It is not automatic: for a minimax-regret criterion,
monotonicity under $\preceqd$ is open, and there Theorem~\ref{thm:bound} applies only
conditionally.}
The regret is
thus \rev{bounded by} the gap at the true optimum, \trim{non-zero only where the merged lower
bounds differ, so the bound is local and computable without re-solving.}

\rr{\paragraph{Not a probability collapse.}
Reading only $\lo_F$ makes the \emph{criterion} a linear functional of $m$, but it does not turn
$\mhat$ into a probability. The object stays set-valued: the safe merge is the join in the
cost-dominance order and moves \emph{both} endpoints, $\lo_B=\max(\lo_i,\lo_j)$ and
$\hi_B=\max(\hi_i,\hi_j)$, so which merges are admissible depends on the boxes, not on their lower
corners alone. The same $\mhat$ then also serves the upper expected cost, the robust reading we
report alongside, which a singleton or pignistic transform could not. Criteria using the whole
interval, such as minimax regret or a Hurwicz mix, are where the box structure is indispensable, and
the decision-aware bound still targets them through Theorem~\ref{thm:bound} under the same open
monotonicity condition.}

\paragraph{\rev{From bound to algorithms.}}
\rev{The theorem turns regret minimisation, which would re-solve the problem for every candidate
compression, into a one-solve surrogate: solve once for $\xstar(m)$, then pick the merge that least
distorts that decision. This is an offline objective (Bound-DA below); a streaming setting must
instead replace the not-yet-known $\xstar(m)$ by a local sensitivity $w_t$ (Online-DA). The theorem
yields the offline surrogate, not an online compressor.}

\paragraph{An exact dynamic program.}
We seek the safe merge that makes the bound small. With weight $w=\xstar(m)\ge0$ this means
minimising $w^\top(\cbar(\mhat)-\cbar(m))$ over the groupings of the $N$ focal boxes into $K$
groups; in the scalar case this is ordered clustering on the line.

\begin{proposition}[Exact scalar dynamic program]
\label{prop:dp}
For a scalar frame ($\cbar\in\R$) the grouping of the $N$ focal boxes into $K$ groups that
minimises $w^\top(\cbar(\mhat)-\cbar(m))$, with $w=\xstar(m)\ge0$ fixed (a positive scalar that
normalises away in the scalar frame), is found exactly in $O(N^2K)$.
\end{proposition}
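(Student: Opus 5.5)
Up to the positive factor $w$, which we normalise away, the objective is
\[
  \Phi=\sum_{\text{groups }G}\Big(\sum_{i\in G}m_i\Big)\Big(\max_{i\in G}\lo_i-\sum_{i\in G}\frac{m_i}{\sum_{j\in G}m_j}\,\lo_i\Big)
      =\sum_G\sum_{i\in G}m_i\big(\max_{j\in G}\lo_j-\lo_i\big).
\]
This holds because the safe merge of a group places its total mass on the maximum lower bound. So $\Phi$ is a sum of independent per-group costs $\phi(G)=\sum_{i\in G}m_i(\max_G\lo-\lo_i)\ge0$. The plan has three steps. First, show that some optimal partition consists of contiguous blocks once the $N$ boxes are sorted by lower bound $\lo_1\le\dots\le\lo_N$. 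Second, write the ordered-clustering recursion over these blocks. Third, show that each block cost can be evaluated in $O(1)$ after an $O(N)$ prefix-sum precomputation.

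\textbf{Contiguity.} This is the step I expect to be the main obstacle. I would use an exchange argument. Take an optimal partition and two groups $G,H$ with $\max_G\lo\le\max_H\lo$. Suppose some $i\in H$ has $\lo_i<\max_G\lo$ while some element of $G$ lies between them in the order. Moving every element of $H$ with $\lo_i\le\max_G\lo$ into $G$ never increases $\Phi$. Each moved element then pays $\max_G\lo-\lo_i\le\max_H\lo-\lo_i$. The remaining elements of $H$ still have a maximum of at most $\max_H\lo$, so their cost does not rise. The maximum of $G$ is unchanged.

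Two cases need care in this exchange. If $H$ empties, we must still have exactly $K$ groups; since $\phi$ of a singleton is $0$, we split off a singleton from another group, which never increases $\Phi$, to restore the count. The argument assumes $K\le N$. Iterating the exchange, and measuring progress by a potential such as the number of inversions, yields a partition into intervals of the sorted order. Ties in $\lo$ are handled by a fixed order among them. If this direct exchange becomes messy, I would fall back on proving that an optimal partition exists where group maxima are nested with their members, in the style of one-dimensional $k$-median contiguity.

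\textbf{Dynamic program.} Let $M_j=\sum_{i\le j}m_i$ and $S_j=\sum_{i\le j}m_i\lo_i$. For a block $\{i+1,\dots,j\}$ the maximum is $\lo_j$, so
\[
  \phi(i,j)=\lo_j\,(M_j-M_i)-(S_j-S_i),
\]
which costs $O(1)$ per evaluation after an $O(N\log N)$ sort and $O(N)$ prefix sums. Define $D[k][j]$ as the minimum cost of splitting the first $j$ boxes into $k$ blocks. Then
\[
  D[k][j]=\min_{k-1\le i<j}\big(D[k-1][i]+\phi(i,j)\big),\qquad D[0][0]=0 .
\]
The answer is $D[K][N]$, and the partition is recovered by backtracking the argmins. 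There are $O(NK)$ states, each taking $O(N)$ time, for a total of $O(N^2K)$. Correctness follows from contiguity together with the additivity of $\Phi$ over blocks, by a standard induction on $j$. Since $\Delta(\xstar)=w\,\Phi$, this partition also minimises the bound of Thm.~\ref{thm:bound} among $K$-group safe merges.
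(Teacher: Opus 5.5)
Your argument is correct, but you reach contiguity by a different route from the paper. The paper does it in one global step. It takes an optimal grouping and reassigns every box to the group whose maximum is the smallest one not below the box's own bound. It then observes that each group's maximiser maps back to its own group, so no maximum changes and exactly $K$ nonempty groups survive. Finally, every box now pays the smallest admissible maximum, so the objective cannot rise, and the groups become intervals between consecutive maxima.

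Your pairwise exchange performs the same move two groups at a time, and its fixed point is exactly this reassignment. You correctly show that each single exchange does not increase $\Phi$. The iterative version costs you two extra pieces: a termination argument and, under value ties, the singleton-splitting patch. Both are simpler than you expect.
\begin{itemize}
\item If you compare boxes in the strict total order you already fix to break ties, then the maximiser of $H$ lies strictly above that of $G$ and is never moved. So $H$ never empties and the patch is unnecessary.
\item A potential that works is $\sum_i r(i)$, where $r(i)$ is the rank of the maximiser of $i$'s group. It strictly drops for every moved box and is unchanged for all others.
\item The inversion count you suggest need not decrease. Moving a box down into $G$ can create new inversions with earlier boxes whose group maximum lies between $\max_G$ and $\max_H$.
\end{itemize}

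Your dynamic program is the paper's recurrence. Keeping the subtracted term $S_j-S_i$ in the block cost only shifts the total by the constant $\cbar(m)$, so nothing changes. It is also more explicit than the paper on one point: the prefix sums $M_j,S_j$ make each block cost $O(1)$, which the $O(N^2K)$ bound needs and the paper leaves implicit.
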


\begin{proof}
Sort the boxes so that the relevant bound satisfies $a_1\le\dots\le a_N$ (write $a_\ell$ for the
relevant bound: the lower bound for the lower expected cost, the upper bound for the upper). Merging a block into
one box replaces its bound by the block maximum, the last sorted element. The increase of
$\cbar$ contributed by a block $[i..j]$ is therefore
$a_j\,W(i,j)-\sum_{\ell=i}^{j}m_\ell a_\ell$, where $W(i,j)=\sum_{\ell=i}^{j}m_\ell$. Summed
over any partition, the subtracted term equals $\sum_\ell m_\ell a_\ell=\cbar(m)$, a constant.
Minimising the total increase is thus minimising $\sum_{\text{groups}}a_{\max}\,W$, where
$a_{\max}$ is the largest bound in the group and $W$ its mass.

An optimal grouping can be taken with contiguous blocks. Let the group maxima of an optimal
grouping be $a_{j_1}\le\dots\le a_{j_K}=a_N$ (ties broken arbitrarily), and reassign every box to the group whose maximum is
the smallest one not below the box's own bound. The box attaining a group maximum $a_{j_r}$ maps
back to its own group ($a_{j_r}$ is the smallest group maximum not below itself), so every group
keeps its maximum and stays non-empty: exactly $K$ groups remain. Each box then pays the smallest available
maximum, so the objective does not increase, and the groups become the intervals of sorted
boxes between consecutive maxima. With contiguous blocks $[i..j]$ of cost
$\mathrm{cost}(i,j)=a_j\,W(i,j)$, the recurrence
\[
  D[j,K]=\min_{i\le j}\big\{\,D[i-1,K-1]+\mathrm{cost}(i,j)\,\big\},\qquad D[0,0]=0,
\]
gives the optimum in $O(N^2K)$. \qed
\end{proof}

The block cost satisfies the quadrangle inequality, the relevant difference being
$(a_{j'}-a_j)\,W(i,i'-1)\ge 0$; it is therefore Monge, which suggests standard acceleration for
ordered clustering, but we keep the $O(N^2K)$ version ($N$ counts focal boxes, not graph size).
\trim{For general vector boxes the grouping is no longer one-dimensional; the scalar program is
then exact for scalar costs and, otherwise, a heuristic that sorts boxes by $w^\top(\text{bound})$
(exact when the bounds form a chain).}

\paragraph{Online compression.}
The need is strongest when focal elements pile up: along a path the cost so far is built by
combining the edge beliefs, and the focal count grows fast, so we must compress before the final
cost, hence before the true optimum, is known. At step $t$, with $m_t$ the accumulated belief
and $\mhat_t$ its compression to $K$ boxes, the compressor picks the $K$-grouping minimising
$w_t\,(\cbar(\mhat_t)-\cbar(m_t))$, \rev{where $w_t\ge 0$ is a local sensitivity of the decision. In our online experiments the accumulated cost is one-dimensional, so $w_t$ normalises away and Online-DA is the \emph{scalar specialisation of the bound objective}: it minimises the per-step drift of the expected cost, not a sensitivity distinguishing branches or downstream alternatives};
this is the per-step one-point bound. For the linear read-out the distortion is additive,
$\Delta_{\mathrm{tot}}(x)=\sum_t\Delta_t(x)$, so the regret is bounded by $\sum_t\Delta_t(\xstar(m))$;
\trim{Online-DA replaces the unknown $\xstar(m)$ by $w_t$, a deployable proxy, not a certified
minimiser, and non-linear read-outs break additivity.}
\rr{The break is precise: under a non-linear read-out $g$ of the accumulated cost the per-step
bounds no longer telescope, so a small $\Delta_t$ no longer controls the regret and no per-step
guarantee survives. Online-DA optimises the linear surrogate at every step regardless, hence a
proxy under the non-linear read-out.}

\section{Experiments}
\label{sec:exp}

\paragraph{Instances.}
\trim{Each instance is a small shortest path with $3$ to $5$ edges split into $2$ to $4$ disjoint
branches. In the static study each focal box has, per edge, an integer lower bound in
$\{0,\dots,7\}$ and width in $\{0,1,2\}$, masses uniform on the simplex, and $3$ to $6$ focal
boxes. In the online study each edge carries $2$ or $3$ scalar focal intervals (lower bound in
$\{0,\dots,19\}$, width in $\{0,\dots,4\}$), branches have $2$ to $4$ edges, and the non-linear
budget is the mean expected cost over branches. Ties in $\arg\min$ break by index; we use $5$
seeds of $500$ instances ($2500$ per study) with $95\%$ confidence intervals.}

\paragraph{Methods.}
Representation-aware baselines: \emph{Jousselme}, closest in Jousselme distance (exact in the
static study, greedy in the online study where the brute force is intractable); \emph{Jaccard},
a greedy merge of the closest pair; \emph{largest-mass}, the $K-1$ heaviest boxes, after
Tessem~\cite{tessem1993approximations}; and a \emph{random} safe merge. Decision-aware variants:
\emph{Oracle-DA} minimises the exact regret (solving the true problem for every grouping, a
non-deployable gold standard); \emph{Bound-DA} targets the one-point bound
$\Delta(\xstar(m))$ via one true solve plus the program of Sec.~\ref{sec:method} (exact in the
scalar case, a projection heuristic for the vector instances here); \emph{Online-DA} is the
streaming proxy. \trim{The static study uses Bound-DA against Oracle-DA, the online study Online-DA.}

\rr{\paragraph{Choice of $K$.}
$K$ is a resource budget set by the available memory or latency, not tuned to the data; the method
compresses to whatever $K$ is given. The tables trace the trade-off: the decision-change rate falls
as $K$ grows, and the decision-aware advantage over the baselines is widest at the smallest $K$,
where compression bites hardest, so decision-awareness matters most when the budget is tight.}

\paragraph{Static study.}
\trim{Table~\ref{tab:freq} reports the lower expected cost at $K=2$. Bound-DA nearly matches the
Oracle-DA lower bound, changes the decision about six times less often than the representation-aware
methods, and has smaller mean and tail regret, at the price of less faithfulness (larger
$d_{\mathrm{Jou}}$). At $K=3$ the ordering is the same (Bound-DA $0.7\%$ against $9$ to $14\%$), and
the upper expected cost behaves identically ($2.4\%$ at $K=2$).} \rev{This isolates the objective,
not computational necessity (a single decision is already polynomial): intrinsic distance and
decision regret genuinely differ, and Bound-DA (the projection heuristic here, not the exact scalar
DP) nearly matches Oracle-DA.}

\begin{table}[tb]
\centering
\caption{Static study, lower expected cost, $K=2$ ($2500$ instances, $5$ seeds). Fraction with
$R>0$ ($\pm95\%$ CI), mean, $95$th-percentile and maximum regret, and mean Jousselme distance
$d_{\mathrm{Jou}}(m,\mhat)$. Oracle-DA is a non-deployable lower bound on the achievable decision-change
rate.}
\label{tab:freq}
\begin{tabular*}{\textwidth}{@{\extracolsep{\fill}}lccccc@{}}
\toprule
method & dec.-chg.\% & mean $R$ & $q_{95}(R)$ & max $R$ & $d_{\mathrm{Jou}}$ \\
\midrule
Jousselme         & $13.6\pm1.3$ & $0.085$ & $0.67$ & $4.81$ & $0.45$ \\
Jaccard           & $16.9\pm1.5$ & $0.128$ & $0.94$ & $3.82$ & $0.58$ \\
largest-mass      & $14.2\pm1.4$ & $0.094$ & $0.71$ & $4.81$ & $0.48$ \\
random safe       & $18.0\pm1.5$ & $0.143$ & $1.04$ & $3.89$ & $0.59$ \\
\textbf{Bound-DA} & $\mathbf{2.5\pm0.6}$ & $\mathbf{0.010}$ & $\mathbf{0.00}$ & $2.21$ & $0.56$ \\
Oracle-DA         & $1.8\pm0.5$ & $0.006$ & $0.00$ & $2.21$ & $0.60$ \\
\bottomrule
\end{tabular*}
\end{table}

\paragraph{Online study.}
\trim{Each branch builds its cost step by step, accumulating about $40$ focal boxes before
compression. We compress to $K$ after each step, decide with the compressed belief, and judge with
the full belief, under a linear read-out (expected cost) and a non-linear one (plausibility that
the cost passes a budget).} \trim{Exact Jousselme grouping is intractable here, so we use a greedy
variant, which is itself the point of a polynomial rule.} Table~\ref{tab:online} reports the
fraction of decision changes. Under the linear read-out, the one covered by the theory, Online-DA
changes the decision least at every $K$, below even the greedy-Jousselme baseline, and its
worst-case regret is the smallest too. Under the non-linear read-out, where Online-DA is only a
proxy, it is on par with greedy-Jousselme and still well below the mass-based and random baselines.
\rev{Here compression is genuinely necessary, the full belief and true optimum not yet being
available; the non-linear read-out is a stress test outside the theory.}

\begin{table}[tb]
\centering
\caption{Online study ($2500$ instances, $5$ seeds). Fraction of decision changes ($R>0$, in
percent); $95\%$ CIs are at most $\pm1.9$. Online-DA is clearly lowest under the linear
read-out; under the non-linear read-out, where it is only a proxy, it ties the greedy-Jousselme
baseline.}
\label{tab:online}
\begin{tabular*}{\textwidth}{@{\extracolsep{\fill}}llccccc@{}}
\toprule
read-out & $K$ & \textbf{Online-DA} & Jous.\ greedy & Jaccard & l.-mass & random \\
\midrule
linear     & 2 & $\mathbf{11.1}$ & $13.8$ & $16.9$ & $17.3$ & $18.6$ \\
linear     & 3 & $\phantom{0}\mathbf{6.0}$ & $\phantom{0}9.4$ & $12.2$ & $14.8$ & $17.4$ \\
linear     & 4 & $\phantom{0}\mathbf{4.5}$ & $\phantom{0}7.0$ & $\phantom{0}8.6$ & $13.0$ & $16.0$ \\
non-linear & 2 & $\mathbf{19.6}$ & $21.1$ & $24.3$ & $26.5$ & $34.1$ \\
non-linear & 3 & $\mathbf{10.5}$ & $10.8$ & $14.9$ & $16.8$ & $24.7$ \\
non-linear & 4 & $\phantom{0}7.0$ & $\phantom{0}\mathbf{6.5}$ & $\phantom{0}9.3$ & $12.0$ & $19.0$ \\
\bottomrule
\end{tabular*}
\end{table}

\section{Conclusion}
\label{sec:conclusion}

We proposed a decision-aware way to approximate a belief function that feeds a combinatorial
decision: control the decision regret rather than an intrinsic distance, via a one-point bound, an
exact scalar dynamic program, and an online version. In experiments a deployable bound-aware rule
nearly matches a clairvoyant oracle and changes the decision less often than representation-aware
baselines, at the cost of a larger representation distance. \trim{Two points stay open: exact
grouping in the vector case is not one-dimensional clustering (our scalar program is then a
projection heuristic), and the monotonicity behind Theorem~\ref{thm:bound} is only conjectured for
a minimax regret criterion. Only the single lower bound $\lo_F$ enters the lower expected cost,
which reduces it to one vector $\cbar$ and enables the program, while the regret and the bound
hold for arbitrary focal sets.} \rr{Although the experiments use shortest paths, nothing in the
construction is specific to them: Theorem~\ref{thm:bound} needs only a criterion monotone under the
safe merge over $x\in X\subseteq\{0,1\}^n$, and the scalar program depends on the focal count, not
on $X$. Any linear evidential $0$--$1$ problem, such as knapsack or assignment, fits the same
template with its own deterministic solver.} \trim{Extending the approach to uncertain constraints
is left for future work.}

\bibliographystyle{splncs04}
\bibliography{refs}

\end{document}